\pdfoutput=1
\documentclass[sigconf]{aamas}  %

\settopmatter{printacmref=false} %
\renewcommand\footnotetextcopyrightpermission[1]{} %
\pagestyle{plain} %

\usepackage{booktabs}

\setcopyright{ifaamas}  %
\acmDOI{doi}  %
\acmISBN{}  %
\acmConference[AAMAS'18]{Proc.\@ of the 17th International Conference on Autonomous Agents and Multiagent Systems (AAMAS 2018), M.~Dastani, G.~Sukthankar, E.~Andre, S.~Koenig (eds.)}{July 2018}{Stockholm, Sweden}  %
\acmYear{2018}  %
\copyrightyear{2018}  %
\acmPrice{}  %

\usepackage{subfig}
\usepackage{array}
\graphicspath{  {.}
                {fig1/figs/}
                {fig1b/figs/}
                {fig2/figs/}
                {fig3/figs/}
                {gan_no_bn_appendix/figs/}
            }

\usepackage{algorithm}
\usepackage{algpseudocode}

\usepackage{amsfonts}       %
\usepackage{amsmath}
\usepackage{amssymb}

\usepackage{verbatim}
\usepackage{float}
\usepackage{amsthm}
\usepackage{amstext}

\makeatletter

\pdfpageheight\paperheight
\pdfpagewidth\paperwidth

\floatstyle{ruled}
\newfloat{algorithm}{tbp}{loa}
\providecommand{\algorithmname}{Algorithm}
\floatname{algorithm}{\protect\algorithmname}

\theoremstyle{plain}
\newtheorem{thm}{\protect\theoremname}
\theoremstyle{definition}
\newtheorem{defn}[thm]{\protect\definitionname}
\theoremstyle{plain}
\newtheorem{fact}[thm]{\protect\factname}
\ifx\proof\undefined
\newenvironment{proof}[1][\protect\proofname]{\par
\normalfont\topsep6\p@\@plus6\p@\relax
\trivlist
\itemindent\parindent
\item[\hskip\labelsep\scshape #1]\ignorespaces
}{%
\endtrivlist\@endpefalse
}
\providecommand{\proofname}{Proof}
\fi

\providecommand{\defas}     {\mathrel{\triangleq}}

\providecommand{\funcName}[1]{\textsc{#1}}
\newcommand{\E}{\mathbf{E}}

\providecommand{\reals}{\mathbb {R} }

\newcommand{\set}[1]{\mathcal{#1}}

\providecommand{\indexMarkup}[1]{(#1)}

 \providecommand{\argsA}[2]{ {#1}_{#2} } 
 \providecommand{\argsAI}[3]{ {#1}_{#2}^{\indexMarkup{#3}} }


%
%
%

%

%

%
%
%

%
%
%
%

%

%
%
%
%
%

%
%
%
%
%
%
%
%

%
%
%
%
%
%
%
%
%
%
%

%
%
%
%
%
%

%

%
%

%
%
%
%
%
%
%
%
    
%
%
%
%
%
%
%
%
%
%
%
%
%
%
%
%
%
%
%
%
%
%
%
%
%
%

%
%
%
%
%
%
%
%
%
%

%

%

%
%
%
%
%
%
%
%
%
%
%
%
%
%
%
%
%

%
%
%
%
%
%
%
%

%
%

%

%

%
%
%
%
%
%

%
%
%
%
%
%
%
%
%
%

%
%
%
%
%
%
%
%
%
%
%
%
%
%
%
%

%

%
%
%
%

%
%
%
%
%

%
%

%
%
%
%
%
%

 %

%
%
%
%
%
%
%
%
%

%
%
%

%

%

%
%
%

%
%

%
%
%
%
%

%

%
%
%
%

%
%
%
%
%

%
%
%
%
%
%

%
%

%
%
%
%
%
%
%
%
%
%
%
%

%
%
%
%
%
%
%
%
%
%
%
%
%
%
%
%
%

%
%
%
%
%
%
%
%
%
%
%
%
%
%
%
%
%
%
%
%
%
%
%
%
%
%
%
%

%

%
%
%
%
%
%
%
%
%
%
%
%
%
%
%

%
%

%
%
%

%
%
%
%
%
%
%
%
%
%

%
%
%
%
%
%
%
%
%
%
%
%

%
%
%
%

%
%
%

%

%
%

%
%
%
\providecommand{\fig}{Figure~}

\providecommand{\rthm}{Theorem~}

\providecommand{\AGS}{D}
\providecommand{\agentS}{\set{\AGS}}
\providecommand{\agentI}[1]{{#1}}
\providecommand{\nrA}{n} %

\providecommand{\AC}{s}                 %
\providecommand{\ACS}{\set{S}}                     %

\providecommand{\aA}[1]{\argsA{\AC}{#1}}
\providecommand{\aAS}[1]{\argsA{\ACS}{#1}}      
\providecommand{\aAI}[2]{\argsAI{\AC}{#1}{#2}}

\providecommand{\utF}{u}		%
\providecommand{\utA}[1]{\argsA{\utF}{#1}}	%

\usepackage{verbatim}

\providecommand{\definitionname}{Definition}
\providecommand{\factname}{Fact}
\providecommand{\theoremname}{Theorem}

\addtolength{\floatsep}{-2mm}
\addtolength{\textfloatsep}{-2mm}
\addtolength{\abovecaptionskip}{-1mm}
\addtolength{\belowcaptionskip}{-1mm}
\addtolength{\abovedisplayskip}{-1mm}
\addtolength{\belowdisplayskip}{-1mm}
\addtolength{\arraycolsep}{-1mm}

\usepackage{enumitem}
\usepackage{epstopdf}

\makeatother

\begin{document}

\title{GANGs: Generative Adversarial Network Games}

\author{Frans A. Oliehoek}
\affiliation{\institution{University of Liverpool}}
\email{frans.oliehoek@liverpool.ac.uk}

\author{Rahul Savani}
\affiliation{\institution{University of Liverpool}}
\email{rahul.savani@liverpool.ac.uk}

\author{Jose Gallego-Posada}
\affiliation{\institution{University of Amsterdam}}
\email{jose.gallegoposada@student.uva.nl}

\author{Elise van der Pol}
\affiliation{\institution{University of Amsterdam}}
\email{e.e.vanderpol@uva.nl}

\author{Edwin D. {de Jong}}
\affiliation{\institution{IMC Financial Markets}}
\email{edwin-de-jong.github.io}

\author{Roderich Gro{\ss}}
\affiliation{\institution{The University of Sheffield}}
\email{r.gross@sheffield.ac.uk}

\keywords{GANs; adversarial learning; game theory}  %

\begin{abstract}
Generative Adversarial Networks (GAN) have become one of the most
successful frameworks for unsupervised generative modeling. As
GANs are difficult to train much research has focused on this. However,
very little of this research has directly exploited game-theoretic
techniques. We introduce Generative Adversarial Network Games (GANGs), which
explicitly model %
a finite zero-sum game between
a generator ($G$) and classifier ($C$) that use \emph{mixed}
strategies. %
The size of these games precludes exact solution methods, therefore 
we define resource-bounded best responses
(RBBRs), and a resource-bounded Nash Equilibrium (RB-NE) as a pair of mixed
strategies such that neither $G$ or $C$ can find a better RBBR. The RB-NE
solution concept is richer than the notion of `local Nash equilibria'
in that it captures not only failures of escaping local optima of
gradient descent, but applies to any approximate best response computations,
including methods with random restarts. To validate our approach,
we solve GANGs with the Parallel Nash Memory algorithm, which provably
monotonically converges to an RB-NE. We compare our results to standard
GAN setups, and demonstrate that our method deals well with typical
GAN problems such as mode collapse, partial mode coverage and forgetting. 
\end{abstract}

\maketitle

\section*{}

\global\long\def\nrA{n}

\global\long\def\aA#1{\argsA{\AC}{#1}}

\global\long\def\utA#1{\argsA{\utF}{#1}}

\global\long\def\mA#1{\argsA{\mu}{#1}}

\global\long\def\aAI#1#2{\argsAI{\AC}{#1}{#2}}

\global\long\def\aAS#1{\argsA{\ACS}{#1}}

\global\long\def\gameval{v^{*}}

\global\long\def\mf{\phi}
 %

\global\long\def\paramG{\theta_{G}}
 %

\global\long\def\paramC{\theta_{C}}
 %

\global\long\def\paramGS{\Theta_{G}}
 %

\global\long\def\paramCS{\Theta_{C}}
 %

\global\long\def\dim{d}
 %

\vspace{-0.5cm}

\section{Introduction}

Generative Adversarial Networks (GANs) \citep{Goodfellow14NIPS27}
are a framework in which two neural networks compete with each other:
the \emph{generator (G) }tries to trick the \emph{classifier (C)}
into classifying its generated fake data as true. GANs hold great
promise for the development of accurate generative models for
complex distributions, such as the distribution of images of written
digits or faces. Consequently, in just a few years, GANs have grown
into a major topic of research in machine learning. A core appeal
is that they do not need to rely on distance metrics~\citep{Li16SI}.
However, GANs are difficult to train and much research has focused
on this \citep{Unterthiner17arXiv,ArjovskyB17,ACB17_WGAN}. Typical
problems are \emph{mode collapse} in which the generator only outputs
points from a single mode
of the distribution and \emph{partial mode coverage
} in which the generator only learns to represent a (small) subset
of the modes of the distribution. Moreover, while learning the players
may \emph{forget}: e.g., it is possible that a
classifier correctly learns to classify part of the input space as
`fake' only to forget this later when the generator no longer generates
examples in this part of space. 
Finally, when learning via gradient descent one can get stuck 
in \emph{local} Nash equilibria~\citep{Ratliff13ACCCC}.

We introduce a novel approach that does not suffer from local equilibria:
\emph{Generative Adversarial Network Games (GANGs)} formulate adversarial networks
as \emph{finite} zero-sum games, and the solutions that
we try to find are saddle points in \emph{mixed strategies}. This approach is
motivated by the observation that, considering a GAN as a finite zero-sum game,
in the space of mixed strategies, any local Nash equilibrium is a global one.
Intuitively, the reason for this is that whenever there is a profitable pure
strategy deviation one can move towards it in the space of mixed strategies.

However, as we cannot expect to find such exact best responses due to the extremely
large number of pure strategies that result for sensible choices of neural
network classes, we introduce Resource-Bounded Best-Responses (RBBRs), and the
corresponding Resource-Bounded Nash equilibrium (RB-NE), which is a pair of mixed
strategies in which no player can find a better RBBR. This is richer than the
notion of local Nash equilibria in that it captures not only failures of
escaping local optima of gradient descent, but applies to \emph{any} approximate best
response computations, including methods with random restarts, and allows us to
provide convergence guarantees. 

The key features of our approach are that:
\begin{itemize}[leftmargin=3mm]
	\item It is based on a framework of finite zero-sum games, and as such it
		enables the use of existing game-theoretic methods. In this paper we 
		focus on one such method, Parallel Nash Memory (PNM).
    \item We show that PNM will provably and monotonically converge to an RB-NE. 
	\item It enables us to understand existing GAN objectives 
		(WGAN, Goodfellow's training heuristic) in the context of zero-sum games.
	\item Moreover, it works for any network architectures (unlike previous
		approaches, see Related Work). In particular, future improvements in
		classifiers/generator networks can directly be exploited.
\end{itemize}

We investigate empirically the effectiveness of PNM and show that it can indeed
deal well with typical GAN problems such as mode collapse, partial mode coverage
and forgetting, especially in distributions with less symmetric structure to
their modes. 
At the same time, a naive implementation of our method does have some
disadvantages and we provide an intuitively appealing way to deal with these.
Our promising results suggest several interesting directions for further work.

\section{Background }

\label{sec:background}  
We defer a more detailed treatment of related work on GANs and recent game theoretic approaches until the end of the paper.
Here, we start by introducing some basic game-theoretic notation.
\begin{defn}
    A \emph{strategic game} (also `normal-form game'), is a tuple $\left\langle \agentS,\left\{ \aAS i\right\} _{i=1}^{\nrA},\left\{ \utA i\right\} _{i=1}^{\nrA}\right\rangle $,
where $\agentS=\{\agentI1,\dots,\agentI\nrA\}$ is the set of\emph{
players}, $\aAS i$ is the set of \emph{pure strategies} for player
$i$, and $\utA i:\mathcal{S}\to\mathbb{R}$ is $i's$ payoff function
defined on the set of pure strategy profiles $\mathcal{{S}}:=\aAS 1\times\dots\times\aAS{\nrA}$. 
When the set of players and their action sets are finite, the strategic game is \emph{finite}.
\end{defn}
A fundamental concept in game theory is the Nash equilibrium (NE), which
is a strategy profile such that no player can unilaterally deviate
and improve his payoff. 
\begin{defn}[Pure Nash equilibrium]
 A pure strategy profile $s=\left\langle s_{1},\dots,s_{\nrA}\right\rangle $
is an NE if and only if\emph{ $\utA i(s)\geq\utA i(\left\langle \aA 1,\dots,\aA i',\dots,\aA{\nrA}\right\rangle )$
}for all players $i$ and $\aA i'\in\aAS i$.
\end{defn}
A finite game may not possess a pure NE. A \emph{mixed strategy} of
player $i$ is a probability distribution over $i$'s pure strategies
$\aAS i$. The payoff of a player under a profile of mixed strategies
$\mA{} = \left\langle \mA{1},\dots,\mA{\nrA}\right\rangle $ is defined as
the expectation: 
\[
    \utA i(\mA{}):=
                \sum_{\aA{}\in\aAS{}} 
                [ \prod_{j\in \agentS}\mA{j}(\aA{j}) ] \cdot \utA{i}(\aA{}).
\]

Then an NE in mixed strategies is defined as follows. 
\begin{defn}[Mixed Nash equilibrium]
    A %
    $\mA{}$
    is an NE if and only if\emph{ $\utA i(\mA{})\geq\utA i(\left\langle \mA{1},\dots,\aA i',\dots,\mA{n}\right\rangle )$}
for all players $i$ and $\aA i'\in\aAS i$.
\end{defn}
Every finite game has at least one NE in mixed strategies~\cite{Nash50}.
In this paper we deal with two-player %
\emph{zero-sum} games, where
$\utA 1(\aA 1,\aA 2)=-\utA 2(\aA 1,\aA 2)$ for all $\aA 1\in\aAS 1,\aA 2\in\aAS 2$.
The equilibria of zero-sum games, also called \emph{saddle points},
have several important properties,
as stated in the following theorem.
\begin{thm}[Minmax Theorem \cite{vNM28}]
\label{thm:minmax}In a zero-sum game, we have 
\[
    \min_{\mA{1}}\max_{\mA{2}}u_{i}(\mA{})=\max_{\mA{2}}\min_{\mA{1}}u_{i}(\mA{})=v.
\]
We call $v$ the value of the game. All equilibria have payoff $v$.
\end{thm}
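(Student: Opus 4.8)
The plan is to derive the theorem from two ingredients only: the elementary ``weak'' inequality, valid for any payoff, and the existence of a mixed Nash equilibrium, which is already available via Nash's theorem cited above. It suffices to treat one player: because $\utA{1}=-\utA{2}$ in a zero-sum game, the identity for the other payoff follows at once, and the two displayed equations are the same statement up to relabeling the players. Concretely, I will prove that $\max_{\mA{1}}\min_{\mA{2}}\utA{1}=\min_{\mA{2}}\max_{\mA{1}}\utA{1}=v$, reading player~$1$ as the maximizer and player~$2$ as the minimizer of $\utA{1}$, and that this common value $v$ is the payoff of every equilibrium.

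First I would note that every minimum and maximum over a set of mixed strategies is attained, since each $\aAS{i}$ is finite, so player~$i$'s mixed-strategy set is a compact simplex and $\utA{i}$ is continuous (in fact multilinear, being an expectation). The easy direction is then routine: for all $(\mA{1},\mA{2})$ one has $\min_{\mA{2}'}\utA{1}(\mA{1},\mA{2}')\le\utA{1}(\mA{1},\mA{2})\le\max_{\mA{1}'}\utA{1}(\mA{1}',\mA{2})$, hence taking $\max_{\mA{1}}$ of the leftmost expression and $\min_{\mA{2}}$ of the rightmost one gives $\max_{\mA{1}}\min_{\mA{2}}\utA{1}\le\min_{\mA{2}}\max_{\mA{1}}\utA{1}$.

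For the reverse inequality I would invoke Nash's theorem to obtain a mixed NE $\langle\mA{1}^{*},\mA{2}^{*}\rangle$. Because $\utA{1}(\mA{1},\mA{2}^{*})$ is affine in $\mA{1}$, its maximum over the simplex is attained at a pure strategy, so the equilibrium condition --- which only rules out profitable \emph{pure} deviations --- already implies that $\mA{1}^{*}$ best-responds to $\mA{2}^{*}$ among \emph{all} mixed strategies; symmetrically, using $\utA{2}=-\utA{1}$, $\mA{2}^{*}$ minimizes $\utA{1}(\mA{1}^{*},\cdot)$. Writing $v\defas\utA{1}(\mA{1}^{*},\mA{2}^{*})$, these two facts read $v=\max_{\mA{1}}\utA{1}(\mA{1},\mA{2}^{*})$ and $v=\min_{\mA{2}}\utA{1}(\mA{1}^{*},\mA{2})$. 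Specializing the outer optimizations at $\mA{2}^{*}$ and at $\mA{1}^{*}$ respectively then yields $\min_{\mA{2}}\max_{\mA{1}}\utA{1}\le\max_{\mA{1}}\utA{1}(\mA{1},\mA{2}^{*})=v=\min_{\mA{2}}\utA{1}(\mA{1}^{*},\mA{2})\le\max_{\mA{1}}\min_{\mA{2}}\utA{1}$, and combined with the easy direction this forces $\max_{\mA{1}}\min_{\mA{2}}\utA{1}=\min_{\mA{2}}\max_{\mA{1}}\utA{1}=v$. The same chain applies to an \emph{arbitrary} NE, whereas its outer terms are intrinsic to the game; hence every NE has payoff exactly $v$, which is thereby well-defined.

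I expect the only genuine obstacle to be the ingredient I am taking for granted --- the existence of a mixed NE; everything else is bookkeeping around bilinearity and compactness. A self-contained proof of this theorem that does not lean on Nash's theorem would instead have to construct the saddle point directly: for instance via a separating-hyperplane argument applied to the convex hull of the columns of the payoff matrix, or, equivalently, via linear-programming duality for the pair of linear programs computing the two players' optimal mixed strategies. Carrying out that construction --- rather than the bookkeeping above --- is where the substantive work would lie.
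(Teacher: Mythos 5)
Your argument is correct. Note, though, that the paper offers no proof of this statement at all: it is von Neumann's classical minimax theorem, stated with a citation to the 1928 paper, so there is no ``paper proof'' to compare against. Your route --- the elementary inequality $\max_{\mu_1}\min_{\mu_2}u_1\le\min_{\mu_2}\max_{\mu_1}u_1$ plus existence of a mixed NE (Nash's theorem, which the paper also invokes without proof) to close the gap --- is a standard and valid derivation, and you correctly handle the one real subtlety in the paper's definitions: the mixed-NE condition there only quantifies over \emph{pure} deviations, and you justify upgrading it to all mixed deviations via multilinearity of the payoff on a compact simplex. The ``all equilibria have payoff $v$'' step also goes through as you state it, since any NE payoff is sandwiched between the two game-intrinsic quantities $\max\min$ and $\min\max$. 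Two small remarks. First, the derivation is logically ``backwards'' relative to the usual development: von Neumann's theorem predates and is weaker than Nash's, and is normally proved self-containedly by LP duality or a separating-hyperplane argument, exactly as you acknowledge in your final paragraph; resting it on Nash's theorem means resting it on a fixed-point theorem, a strictly heavier tool. That is harmless here, since the paper already cites Nash's existence result as given. Second, the theorem as printed is slightly loose with signs --- it writes $u_i$ generically inside $\min_{\mu_1}\max_{\mu_2}$, although the value flips sign depending on whether $i=1$ or $i=2$; your relabeling (fixing attention on $u_1$ and making player 1 the maximizer) silently repairs this, which is the intended reading.
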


Moreover, these equilibria $\mA{}$ can be expressed in terms of so-called 
maxmin strategies.
A \emph{maxmin strategy} of player 1 is a $\hat{{\mA{1}}}$ that solves
$min_{\mA{2}}u_{1}(\hat{{\mA{1}}},\mA{2})=v$, and a maxmin strategy
of player 2 is a $\hat{{\mA{2}}}$ that solves $min_{\mA{1}}u_{2}(\mA{1},\hat{{\mA{2}}})=v$.
Any pair of maxmin strategies of the players is an equilibrium. This 
directly implies that equilibrium strategies are interchangeable:
if
$ \langle \mA{1}, \mA{2} \rangle $
and
$ \langle \mA{1}',\mA{2}' \rangle $
are equilibria, then so are
$ \langle \mA{1}',\mA{2} \rangle $
and
$ \langle \mA{1} ,\mA{2}' \rangle $ \citep{Osborne+Rubinstein94}.
Moreover, the convex combination of two equilibria is an equilibrium, 
meaning that  the game has either one or infinitely many equilibria.

We will not always be able to compute an exact NE, and so we employ
the standard, additive notion of \emph{approximate equilibrium:}
\begin{defn}
    Let $\mA{-i}$ denote a strategy of the opponent of player $i$.
    A pair of (possibly pure) strategies $(\mA i,\mA{-i})$ is an $\epsilon$-NE if
\begin{equation}
    \forall i\qquad
    \utA i(\mA i,\mA{-i})
    \geq
    \max_{\mA i'}
    \utA i(\mA i',\mA{-i})-\epsilon.
\label{eq:epsilon-NE}
\end{equation}
In other words, no player can gain more than $\epsilon$ by deviating.
\end{defn}

In the literature, GANs have not typically been considered as finite
games. The natural interpretation of the standard setup of GANs is
of an infinite game where payoffs are defined over all possible weight
parameters for the respective neural networks. With this view we do
not obtain existence of saddle points, nor the desirable properties
of Theorem \ref{thm:minmax}. Some results on the existence of saddle
points in infinite action games are known, but they require properties
like convexity and concavity of utility functions \citep{Aubin98optima}, which we cannot
apply as they would need to hold w.r.t. the neural network parameters.
This is why the notion of  \emph{local Nash equilibrium (LNE)} has
arisen in the literature \citep{Ratliff13ACCCC,Unterthiner17arXiv}.
Roughly, an LNE is a strategy profile where neither player can improve
in a small neighborhood of the profile. In finite games
every LNE is an NE, as, whenever there is a global deviation, one
can always deviate locally in the space of mixed strategies 
towards a pure best response.

\section{GANGs}

\label{sec:GANGs}

Intuitively, it is clear that GANs are closely related to games, and
the original paper by \cite{Goodfellow14NIPS27} already points out
that $G$ and $C$ essentially play a minimax game. %
The exact relation has not been explicitly described, leading to confusion
on whether the sought solution is a saddle point or not\textbf{ }\citep{NIPSpanel16}.\textbf{
} We set out to address this confusion by introducing
the \emph{Generative Adversarial Network Game (GANGs)} formalism, which 
explicitly phrases adversarial networks as zero-sum strategic-form games. It
builds on the building blocks of regular GANs, but it emphasizes the
fact that $G$ and $C$ play a zero-sum game, and formalizes the action
space available to these players.

In this section, we provide a `fully rational' formulation of GANGs,
i.e., one where we make no or limited assumptions on the bounded resources
that we face in practice. Bounded resources will be introduced in
the next section.
\begin{defn}
\label{def:A-GANG-is}A \textbf{GANG }is a tuple $\mathcal{M} = \left\langle p_{d},\left\langle G,p_{z}\right\rangle ,C,\mf\right\rangle $
with 
\begin{itemize}[leftmargin=3mm]
\item $p_{d}(x)$ is the distribution over (`true' or `real') data points
$x\in\mathbb{R}^{\dim}$.
\item $G$ is a neural network class with parameter vector $\paramG\in\paramGS$
and $d$ outputs, such that $G(z;\paramG)\in\mathbb{R}^{\dim}$ denotes
the (`fake' or `generated') output of $G$ on a random vector $z$
drawn from some distribution $z\sim p_{z}$. We will typically leave
dependence on the parameter vector $\paramG$ implicit and just write
$G(z)$.
\item $C$ is a neural network class with parameter vector $\paramC\in\paramCS$
and a single output, such that the output $C(x;\paramC)\in[0,1]$
indicates the `realness' of $x$ according to $C$. We will interpret
it as the probability that $C$ assigns to a point $x$ being real, even
though it does not need to be a strict probability. 
\item $\mf$ is a \emph{measuring function} \citep{Arora17ICML}: $\mf:[0,1]\to\reals$
---typically $\log$, for GANs, or the identity mapping, for WGANs---%
that is used to specify the payoffs of the agents, explained next.
\end{itemize}
\end{defn}
A GANG induces a zero-sum game in an intuitive way:
\begin{defn}
\label{def:The-induced-zero-sum}The induced zero-sum strategic-form
game of a GANG is $\left\langle \agentS=\{G,C\},\left\{ \aAS G,\aAS C\right\} ,\left\{ \utA G,\utA C\right\} \right\rangle $
with:
\begin{itemize}[leftmargin=3mm]
\item $\aAS G=\left\{ G(\cdot;\paramG)\mid\paramG\in\paramGS\right\} $,
elements of which we denote by $\aA G$;
\item $\aAS C=\left\{ C(\cdot;\paramC)\mid\paramC\in\paramCS\right\} $,
elements of which we denote by $\aA C$;
\item The payoff of G, for all ($\paramG\in\paramGS,\paramC\in\paramCS$
and their induced) strategies $\aA G$, $\aA C$ is $\utA G(\aA G,\aA C)=-\utA C(\aA G,\aA C)$; 
\item The payoff of the classifier is given by:
\[
\utA C(\aA G,\aA C)=\E_{x\sim p_{d}}\left[\mf\left(\aA C(x)\right)\right]-\E_{z\sim p_{z}}\left[\phi\left(\aA C\left(\aA G(z)\right)\right)\right].
\]
That is, the score of the classifier is the expected correct classification
on the real data minus the expected incorrect classification on the
fake data. 
\end{itemize}
\end{defn}
In practice, GANs are represented using floating point numbers, of which, for 
a given setup, there 
is only a finite (albeit large) number. In GANGs, we formalize this:
\begin{defn}
Any GANG where $G,C$ are finite classes---i.e.,
classes of networks constructed from a finite set of node types (e.g.,
\{Sigmoid, ReLu, Linear\})---and with
architectures of bounded size, is called a \emph{finite network
class GANG.} A finite network class GANG in which the sets $\paramGS,\paramCS$
are finite is called a \emph{finite GANG. }
\end{defn}
From now on, we deal with finite GANGs. These correspond to finite
(zero-sum) strategic games and, as explained earlier, they will possess one or
infinitely many mixed NEs with the same payoff. 
Note that we only point out that finiteness of floating point systems leads to
finite GANGs, but we do not impose any additional constraints or
discretization.  Therefore, our finite GANGs have the same representational
capacity as normal GANs that are implemented using floating point arithmetic.

\textbf{Zero-sum vs Non-zero-sum. }In contrast to much of the GAN-literature,
we explicitly formulate GANGs as being zero-sum games. GANs \cite{Goodfellow14NIPS27}
formulate 
the payoff of the generator as a function of the fake data only: 
$\utA G = \E_{z\sim p_{z}}\left[\mf\left(\aA C\left(\aA G(z)\right)\right)\right]$. 
However, it turns out that this difference typically has no implications
for the sought solutions. We clarify this with the following theorem,
and investigate particular instantiations below. In game theory, 
two games are called \emph{strategically equivalent} if
they possess exactly the same set of Nash equilibria; this (standard)
definition is concerned only about the mixed strategies played in equilibrium
and not the resulting payoffs. The following is a well-known game
transformation (folklore, see \cite{Liu96}) that creates a new strategically
equivalent game:
\begin{fact}
\label{fact:equiv}Consider a game $\Gamma$=$\left\langle \{1,2\},\{\aAS 1,\aAS 2\},\{\utA 1,\utA 2\}\right\rangle $.
Fix a pure strategy $s_{2}\in\aAS 2.$ Define $\bar{{\utA 1}}$ as
identical to $\utA 1$ except that $\bar{\utA 1}(\aA{i},\aA 2)=\utA 1(\aA{i},\aA 2)+c$
for all $\aA{i}\in\aAS 1$ and some constant $c$. 
We have that $\Gamma$ and 
$\bar{{\Gamma}}$=$\left\langle \{1,2\},\{\aAS 1,\aAS 2\},\{\bar{\utA 1},\utA 2\}\right\rangle $
are strategically equivalent.
\end{fact}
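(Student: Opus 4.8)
The plan is to reduce the statement to a one-line computation: once the perturbation of $\utA 1$ is lifted from pure profiles to mixed profiles, the correction it introduces is a term that depends only on player $2$'s mixed strategy, not on player $1$'s. Since player $1$'s best responses are computed with player $2$'s strategy held fixed, adding to $\utA 1$ a quantity that is constant in player $1$'s own strategy cannot change which of player $1$'s strategies are best responses; and $\utA 2$ is untouched, so nothing changes for player $2$. Strategic equivalence then follows by matching the Nash conditions, in the pure-deviation form used in the definitions above, clause by clause.

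Concretely, first I would extend $\bar{\utA 1}$ to mixed profiles $\mA{} = \langle \mA 1, \mA 2 \rangle$ by the same expectation used to define $\utA i(\mA{})$. Splitting the defining sum into the column indexed by the fixed strategy $s_2$ and the remaining columns, using that $\bar{\utA 1}$ agrees with $\utA 1$ outside that column and exceeds it by $c$ inside it, and using $\sum_{\aA 1 \in \aAS 1}\mA 1(\aA 1) = 1$, yields
\[
    \bar{\utA 1}(\mA 1, \mA 2) = \utA 1(\mA 1, \mA 2) + c \cdot \mA 2(s_2).
\]
The point is that the correction $c \cdot \mA 2(s_2)$ is independent of $\mA 1$. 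Now fix any profile $\langle \mA 1, \mA 2 \rangle$. For player $1$: for each pure deviation $\aA{1}' \in \aAS 1$, both sides of $\bar{\utA 1}(\mA 1, \mA 2) \ge \bar{\utA 1}(\aA{1}', \mA 2)$ carry the same additive term $c \cdot \mA 2(s_2)$, so this inequality holds iff $\utA 1(\mA 1, \mA 2) \ge \utA 1(\aA{1}', \mA 2)$ does. For player $2$: $\bar{\utA 2} = \utA 2$ by construction, so the player-$2$ conditions are literally identical. Hence $\langle \mA 1, \mA 2 \rangle$ is a Nash equilibrium of $\bar{\Gamma}$ iff it is one of $\Gamma$, i.e., the two games have the same set of Nash equilibria.

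There is essentially no hard step here; the only things to be careful about are the bookkeeping in the column split and the convention --- implicit in the statement, where the fixed second-player strategy appears as $\aA 2$ --- that the constant $c$ is added to the whole column of $\utA 1$ indexed by $s_2$, which is exactly what makes the correction $\mA 1$-independent. It is also worth stating explicitly that $\Gamma$ and $\bar{\Gamma}$ need not share a value or equilibrium payoffs: they differ by $c \cdot \mA 2(s_2)$ at every profile, so the claim is, and can only be, about the set of equilibrium strategy profiles, matching the definition of strategic equivalence given just above.
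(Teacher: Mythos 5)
Your proof is correct. The paper states this Fact without proof, citing it as folklore, so there is no in-paper argument to compare against; your computation---lifting the perturbation to mixed profiles to obtain the additive correction $c \cdot \mA{2}(s_{2})$, observing that it is independent of $\mA{1}$, and matching the Nash conditions (in the pure-deviation form used by the paper's definitions) clause by clause---is the standard argument and is complete, including the correct reading of the fixed column $s_{2}$ and the caveat that only equilibrium strategy sets, not payoffs, are preserved.
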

\begin{thm}
\label{thm:fake-real-decomp}Any finite (non-zero-sum) two-player
game between $G$ and $C$ with payoffs of the following form:
\[
\utA G=Fake_{G}(\aA G,\aA C)=-Fake_{C}(\aA G,\aA C),
\]
\[
\utA C=Real_{C}(\aA C)+Fake_{C}(\aA G,\aA C),
\]
is strategically equivalent to a zero-sum game where $G$ has payoff
$\bar{\utA G}\defas-Real_{C}(\aA C)-Fake_{C}(\aA G,\aA C).$\end{thm}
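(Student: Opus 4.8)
The plan is to derive $\bar{\utA G}$ from $\utA G$ by repeatedly applying the folklore transformation of Fact~\ref{fact:equiv}, taking player~$1$ to be $G$ and player~$2$ to be $C$, and then to verify by direct computation that the resulting game is zero-sum. The crucial observation is that the proposed modification of $G$'s payoff depends only on $C$'s pure strategy: using the hypothesis $Fake_{G}=-Fake_{C}$, for every pure profile $(\aA G,\aA C)$ we have
\[
  \bar{\utA G}(\aA G,\aA C)-\utA G(\aA G,\aA C)
  =\bigl(-Real_{C}(\aA C)-Fake_{C}(\aA G,\aA C)\bigr)-\bigl(-Fake_{C}(\aA G,\aA C)\bigr)
  =-Real_{C}(\aA C).
\]
So, fixing any pure strategy $\aA C\in\aAS C$ of the classifier, $\bar{\utA G}$ restricted to the corresponding \emph{column} is obtained from $\utA G$ by adding the constant $c(\aA C)\defas-Real_{C}(\aA C)$ uniformly over all $\aA G\in\aAS G$; this is precisely the kind of change permitted by Fact~\ref{fact:equiv}.

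\textbf{Steps.} First I would invoke finiteness: since the game is finite, $\aAS C$ is a finite set, so I can enumerate its pure strategies and apply Fact~\ref{fact:equiv} once per pure strategy, in sequence, leaving $\utA C$ untouched throughout. Each application produces a game with the same set of Nash equilibria as the previous one; since strategic equivalence is transitive and only finitely many applications are needed, the game with payoffs $\langle\bar{\utA G},\utA C\rangle$ is strategically equivalent to the original game. Second, I would check that this game is zero-sum: for every pure profile,
\[
  \bar{\utA G}(\aA G,\aA C)+\utA C(\aA G,\aA C)
  =\bigl(-Real_{C}(\aA C)-Fake_{C}(\aA G,\aA C)\bigr)+\bigl(Real_{C}(\aA C)+Fake_{C}(\aA G,\aA C)\bigr)=0,
\]
and hence, by linearity of expectation, the same holds for all mixed profiles as well.

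\textbf{Main obstacle.} I do not expect a genuine obstacle here; the argument is essentially bookkeeping on top of Fact~\ref{fact:equiv}. The only points that require a little care are that Fact~\ref{fact:equiv} is stated for a single fixed column and a single additive constant, so one must use finiteness of $\aAS C$ to justify iterating it column by column and then appeal to transitivity of strategic equivalence; and that the zero-sum property of the transformed game has to be verified explicitly, as above, rather than being immediate from the definition of $\bar{\utA G}$.
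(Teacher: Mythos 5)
Your proof is correct and takes essentially the same route as the paper's: the paper also observes that adding $-Real_{C}(\aA C)$ to $G$'s utility adds a (different) constant to each column and applies Fact~\ref{fact:equiv} iteratively over all $\aA C\in\aAS C$. Your explicit verification of the zero-sum identity and the appeal to transitivity of strategic equivalence are just bookkeeping that the paper leaves implicit.
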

\begin{proof}
By adding$-Real_{C}(\aA C)$ to $G$'s utility function, for each pure strategy $\aA C$ of $C$ 
we add a different constant
to all utilities of $G$ against $\aA C$. Thus, by applying
Fact \ref{fact:equiv} iteratively for all $\aA C\in\aAS C$ we see
that we produce a strategically equivalent game.
\end{proof}

Next, we formally specify the conversion of existing GAN models
to GANGs. We consider the general measure function %
that covers GANs and WGANs. In these models, the payoffs are specified
as
\[
\utA G(\aA G,\aA C)\defas-\E_{z\sim p_{z}}\left[\mf\left(1-\aA C\left(\aA G(z)\right)\right)\right],
\]
\[
\utA C(\aA G,\aA C)\defas\E_{x\sim p_{d}}\left[\mf\left(\aA C(x)\right)\right]+\E_{z\sim p_{z}}\left[\mf\left(1-\aA C\left(\aA G(z)\right)\right)\right].
\]
{\sloppy
These can be written using 
$Fake_{G}(\aA G,\aA C)=-Fake_{C}(\aA G,\aA C)=-\E_{z\sim p_{z}}\left[\mf\left(1-\aA C\left(\aA G(z)\right)\right)\right]$
and $Real_{C}(\aA C)=\E_{x\sim p_{d}}\left[\mf\left(\aA C(x)\right)\right]$.
This means that we can employ \rthm\ref{thm:fake-real-decomp} and
equivalently define 
a GANG with zero-sum payoffs that preserves the NEs.
}

In practice, most work on GANs uses a different
objective, introduced by \cite{Goodfellow14NIPS27}. They say that {[}formulas
altered{]}: ``Rather than training $G$ to minimize $\log(1-\aA C(\aA G(z)))$ we
can train $G$ to maximize $\log\aA C(\aA G(z))$. This objective function
results in the same fixed point of the dynamics
of $G$ and $C$ but provides much stronger gradients early in learning.'' 
This means that they redefine 
$\utA G(\aA G,\aA C)\defas\E_{z\sim p_{z}}\left[\mf\left(\aA C\left(\aA G(z)\right)\right)\right],$
which still can be written as 
$Fake_{G}(\aA G,\aA C)=\E_{z\sim p_{z}}\left[\mf\left(\aA C\left(\aA G(z)\right)\right)\right]$.
Now, \emph{as long as the classifier's payoff is also adapted}
we can still write the payoff functions in the form of \rthm\ref{thm:fake-real-decomp}.
That is, the trick is compatible with a zero-sum formulation, as
long as it is also applied to the classifier. This then yields the formulation of the 
payoffs as used in GANGs (in Def.~\ref{def:The-induced-zero-sum}).

\section{Resource-Bounded GANGs}

\label{sec:RB-GANGs}

While GANGs clarify the relation of existing adversarial network models to
zero-sum games, we will not be able to solve them exactly. 
Even though they are
finite, the number of pure strategies will be huge when we use reasonable neural
network classes and parameter sets.
This means that finding an NE, or even an $\epsilon-$NE
will be typically beyond our capabilities.
Thus we need to deal with players with bounded computational resources. 
In this section, we propose a formal notion of such `bounded rationality`.

\textbf{Resource-Bounded Best-Responses (RBBR).} 
A Nash equilibrium is defined by the absence of better responses for any of the
players. As such, best response computation is a critical tool to verify whether
a strategy profile is an NE, and is also common subroutine in algorithms that
compute an NE. 
However, like the computation of an ($\epsilon$-)NE, computing an
($\epsilon$-)best response will generally be intractable for GANGs.
Therefore, to clarify the type of solutions that we actually can expect to
compute with our bounded computational power, we formalize the notion of
\emph{resource-bounded best response} and show how it naturally leads to 
a solution concept that we call \emph{resource-bounded NE (RB-NE)}.
\begin{defn}
We say that $\aAS i^{RB}\subseteq\aAS i$ is the subset of strategies
of player $i$, that $i$ can compute as a best response, given
its bounded computational resources.
\end{defn}
This computable set is an abstract formulation to capture phenomena
like gradient descent being stuck in local optima, but also more general
reasons for not computing a best response, such as not even reaching
a local optimum in the available time. 
\begin{defn}
    \label{def:RBBR}
A strategy $s_{i}\in\aAS i^{RB}$ of player~$i$ is a \emph{resource-bounded
best-response} (RBBR) against a (possibly mixed) strategy $s_{j}$, if $\forall\aA i'\in\aAS i^{RB}\quad\utA i(\aA i,\aA j)\geq\utA i(\aA i',\aA j)$.
\end{defn}
That is, $\aA i$ only needs to be amongst the best strategies
that player~$i$ \emph{can compute} in response to $\aA j$. We denote
the set of such RBBRs to $\aA j$ by
$\aAS i^{RBBR(\aA j)}\subseteq\aAS i^{RB}$. 
\begin{defn}
A \emph{resource-bounded best-response function}\textbf{ $f_{i}^{RBBR}:\aAS j\to\aAS i^{RB}$
}is a function that maps from the set of possible strategies of player $j$ to an RBBR for~$i$, s.t.\ $\forall_{\aA j}\; f_{i}^{RBBR}(\aA j)\in\aAS i^{RBBR(\aA j)}$.
\end{defn}
Now we can define a class of GANGs which are a better model for games
played by neural networks:
\begin{defn}
\sloppypar
A \emph{resource-bounded GANG} is a tuple $\langle \mathcal{M},\{ f_{g}^{RBBR},f_{c}^{RBBR}\} \rangle$
with $\mathcal{M}$ a finite GANG as above, and $f_{g}^{RBBR},f_{c}^{RBBR}$
the \emph{RBBR functions for both players.}
\end{defn}

For these games, we define an intuitive specialization of NE: %

\begin{defn}
A strategy profile $s=\langle \aA i,\aA j \rangle $ is a \emph{Resource-Bounded Nash Equilibrium (RB-NE)} 
iff 
$\forall i\quad\utA i(\aA i,\aA j)\geq\utA i(f_{i}^{RBBR}(\aA j),\aA j).$
\end{defn}
That is, an RB-NE can be thought of as follows:
we present $\aA{}$ to each player $i$ and it gets the chance to
switch to another strategy, for which it can apply its bounded resources 
(i.e., use $f_{i}^{RBBR}$) exactly once. After this application,
the player's resources are exhausted and if the found $f_{i}^{RBBR}(\aA
j)$ does not lead to a higher payoff it will not have an incentive to
deviate.

Clearly, an RB-NE can be linked to the familiar notion of $\epsilon$-NE
by making assumptions on the power of the best response computation.

\begin{thm}
\label{thm:epsNashGivenSufficientResources}
If the players are powerful enough to compute an $\epsilon$-best
response function, then an RB-NE for their game is an $\epsilon$-NE.
\vspace{-2mm}
\begin{proof}
    \sloppypar
Starting from the RB-NE $(\aA i,\aA j)$, assume an arbitrary~$i$.
By definition of RB-NE $\utA i(\aA i,\aA j)\geq \utA i(f_{i}^{RBBR}(\aA j),\aA j)\geq\max_{\aA i'}\utA i(\aA i',\aA j)-\epsilon$.\end{proof}
\end{thm}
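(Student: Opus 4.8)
The plan is to unwind the definitions and chain two inequalities; I do not anticipate any genuine difficulty. First I would fix an arbitrary RB-NE $s=\langle\aA i,\aA j\rangle$ and an arbitrary player $i$. Directly from the definition of RB-NE, player $i$ gains nothing by switching to the strategy returned by its resource-bounded best-response function, i.e.\ $\utA i(\aA i,\aA j)\geq\utA i(f_{i}^{RBBR}(\aA j),\aA j)$.

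Next I would make precise what the hypothesis ``powerful enough to compute an $\epsilon$-best response function'' provides: it means that the computable set $\aAS i^{RB}$ contains, for every opponent strategy $\aA j$, at least one $\epsilon$-best response to $\aA j$ --- a strategy whose payoff against $\aA j$ is within $\epsilon$ of $\max_{\aA i'}\utA i(\aA i',\aA j)$. Since by Definition~\ref{def:RBBR} every RBBR maximizes $i$'s payoff \emph{within} $\aAS i^{RB}$, the strategy $f_{i}^{RBBR}(\aA j)$ is at least as good against $\aA j$ as that $\epsilon$-best response, so $\utA i(f_{i}^{RBBR}(\aA j),\aA j)\geq\max_{\aA i'}\utA i(\aA i',\aA j)-\epsilon$, where the maximum ranges over all of $\aAS i$ (over pure and mixed strategies alike, which give the same value by linearity of the payoff in one's own mixture). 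Combining this with the inequality from the first paragraph gives $\utA i(\aA i,\aA j)\geq\max_{\aA i'}\utA i(\aA i',\aA j)-\epsilon$, which is precisely condition~\eqref{eq:epsilon-NE}; since $i$ was arbitrary, $s$ is an $\epsilon$-NE.

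The one spot that calls for care --- and the closest thing to an obstacle --- is fixing the reading of the hypothesis: ``computing an $\epsilon$-best response function'' has to be interpreted as ``$\aAS i^{RB}$ is rich enough to contain an $\epsilon$-best response to every $\aA j$'', so that the generic RBBR function $f_{i}^{RBBR}$ automatically inherits the $\epsilon$-optimality guarantee regardless of which maximizer within $\aAS i^{RB}$ it happens to return. With that reading in place, the proof is exactly the two-step chain above and uses nothing beyond the definitions of RB-NE and of $\epsilon$-NE.
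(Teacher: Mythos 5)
Your proof is correct and follows essentially the same route as the paper's: the first inequality is the definition of RB-NE, and the second comes from reading the hypothesis as guaranteeing that $f_{i}^{RBBR}(\aA j)$ is itself an $\epsilon$-best response to $\aA j$. Your extra care in unpacking what ``powerful enough to compute an $\epsilon$-best response function'' means (that $\aAS i^{RB}$ contains an $\epsilon$-best response to every opponent strategy, which the RBBR then weakly dominates within $\aAS i^{RB}$) only makes explicit what the paper's one-line chain leaves implicit.
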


\textbf{Non-deterministic Best Responses. }The above definitions 
assumed deterministic %
RBBR functions
\textbf{$f_{i}^{RBBR}$}. However, in many cases the RBBR function can be
non-deterministic (e.g., due to random restarts),
which means that the sets $\aAS i^{RB}$ are non-deterministic.
This is not a fundamental problem, however, and the same approach can
be adapted to allow for such non-determinism. In particular, now let
$f_{i}^{RBBR}$ be a non-deterministic function, and \emph{define}
$\aAS i^{RB}$ as the (non-deterministic) range of this function.
That is, we define $\aAS i^{RB}$ as that set of strategies that
our non-deterministic RBBR function delivers. Given this modification
the definition of the RB-NE remains unchanged:
a strategy profile $s=\langle \aA i,\aA j \rangle $ is a
\emph{non-deterministic RB-NE} 
if 
each player $i$ uses all its computational resources by calling 
$f_{i}^{RBBR}(\aA j)$ once, 
and no player finds a better strategy to switch to.

\section{Solving GANGs}

\label{sec:SolvingGANGs}

The treatment of GANGs as finite games in mixed strategies opens up
the possibility of extending many of the existing tools and algorithms
for these classes of games \citep{Fudenberg98book,RakhlinS13a,Foster16NIPS}.
In this section, we consider the use of the Parallel Nash Memory,
which is particularly tailored to finding approximate NEs with small
support, and which monotonically converges to such an equilibrium
\citep{Oliehoek06GECCO}.

\begin{algorithm}
\providecommand{\commentSymb}{//}
\begin{algorithmic}[1]
\small
\State{$\langle \aA{G}, \aA{C} \rangle \gets \funcName{InitialStrategies}()                                                  $}
\State{$\langle \mA{G}, \mA{C} \rangle \gets \langle \{\aA{G}\}, \{\aA{C}\} \rangle    $} \Comment{set initial mixtures}
\State{\commentSymb{Main loop:}}
\While{not done}
    \State{$ \aA{G} = \funcName{RBBR}( \mA{C} )                                 $}  \Comment{get new bounded best resp.}
    \State{$ \aA{C} = \funcName{RBBR}( \mA{G} )                                                                                 $}
    \State \commentSymb{expected payoffs of these `tests' against mixture:}
    \State{$ u_{BRs} = \utA{G}(\aA{G},  \mA{C} ) + \utA{C}(\mA{G}, \aA{C}) $}  
    \If{$ u_{BRs} \leq 0 $}
        \State{ done $\gets$ True}
    \Else
        \State{$SG \gets \funcName{AugmentGame}(SG, \aA{G}, \aA{C})     $}
        \State{$\langle \mA{G}, \mA{C} \rangle \gets \funcName{SolveGame}(SG)                                                  $}
    \EndIf
\EndWhile
\State{\textbf{return} $\langle \mA{G}, \mA{C} \rangle $} \Comment{found an BR-NE}
\end{algorithmic}

\protect\caption{\funcName{Parallel Nash Memory for GANGs}}

\label{alg:PNM}
\end{algorithm}

\begin{figure*}[t]
\centering
\begin{tabular}{ccccc}
\subfloat{\includegraphics[width = 0.38\columnwidth]{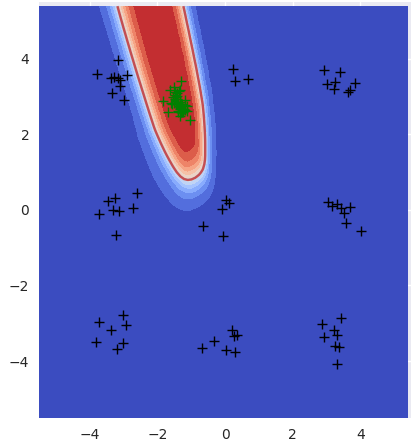}} &
\subfloat{\includegraphics[width = 0.38\columnwidth]{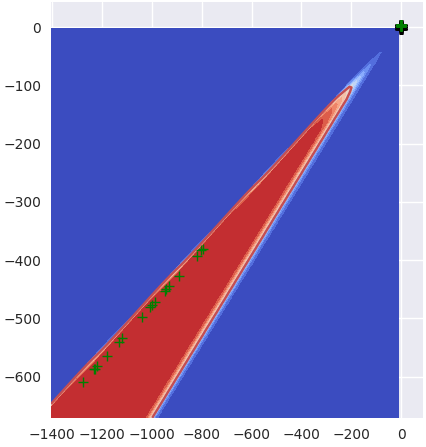}} &
\subfloat{\includegraphics[width = 0.38\columnwidth]{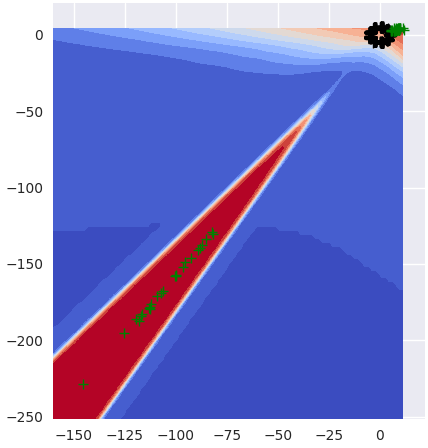}} &
\subfloat{\includegraphics[width = 0.38\columnwidth]{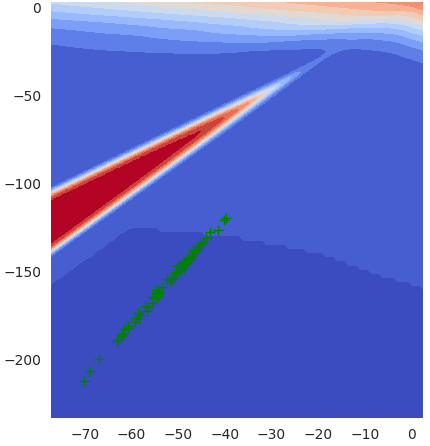}} &
\subfloat{\includegraphics[width = 0.38\columnwidth]{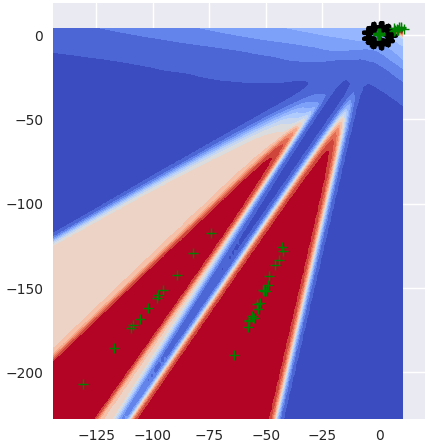}} 
\end{tabular}
\caption{Illustration of exploitation of `overfitted' classifier best-response.}
\label{fig:overfitted}
\end{figure*}

Here we give a concise description of a slightly simplified form of
Parallel Nash Memory (PNM) and how we apply it to GANGs.%
\footnote{This ignores some aspects of PNM that we do not use, such as ways to `discard'
old strategies~\cite{FiciciP03} that have not been used for a long time.
} 
For ease of explanation, we focus on the setting with deterministic 
best responses.\footnote{
    In our experiments, we use random initializations for the best responses.
    To deal with this non-determinism we simply ignore any tests that are not able
    to find a positive payoff over the current mixed strategy NE $\left\langle \mA G,\mA C\right\rangle $,
    but we do not terminate. Instead, we run
    for a pre-specified number of iterations. 
}
The algorithm is shown in Algorithm~\ref{alg:PNM}.
Intuitively, PNM incrementally grows a strategic game $SG$, over
a number of iterations, using the \funcName{AugmentGame} function.
It also maintains a mixed strategy NE $\left\langle \mA G,\mA C\right\rangle $
\emph{of this smaller game} at all times. In each of the iterations
it uses a `search' heuristic to deliver new promising strategies.
In our GANG setting, we use the resource-bounded best-response (RBBR)
functions of the players for this purpose. After having found new strategies,
the game is augmented with these and solved again to find a new NE
of the sub-game $SG$.

In order to augment the game, 
PNM evaluates (by simulation) each newly found strategy for each player
against all of the existing strategies of the other player, thus
constructing a new row and column for the maintained payoff matrix.

In order to implement the best response functions, any existing neural
network architectures (e.g., for GANs) can be used. However, we need
to compute RBBRs against \emph{mixtures }of networks of the other
player.  For $C$ this is trivial: we can just generate a batch
of fake data from the mixture $\mA G$. Implementing a RBBR for $G$
against $\mA C$ is slightly more involved, as we need to back-propagate
the gradient from all the different $\aA C\in\mA C$ to $G$. In our
implementation we subsample a number of such pure $\aA C$ and construct
a larger network for that purpose. Better ways of doing this are an
interesting line of future work.

Intuitively, it is clear that PNM converges to an RB-NE, which we now prove formally.
\begin{thm}
If PNM terminates, then it has found an RB-NE. %
\begin{proof}
\sloppypar
We show that $\utA{BRs}\leq0$ implies we found an RB-NE:
\begin{eqnarray}
\utA{BRs} & = & 
    \utA G(f_{G}^{RBBR}(\mA C),\mA C)   +   \utA C(\mA G,f_{C}^{RBBR}(\mA G))
    \nonumber\\
 & \leq & 0=\utA G(\mA G,\mA C)+\utA C(\mA G,\mA C)     \label{eq:UBR}
\end{eqnarray}
Note that, per Def.~\ref{def:RBBR}, 
$\utA G(f_{G}^{RBBR}(\mA C), \mA C)    \geq   
 \utA G(\aA{G}'            , \mA C) $
 for all computable $\aA{G}' \in  \aAS{G}^{RB}$ (and similar for $C$). 
Therefore, the only way that 
$\utA G(f_{G}^{RBBR}(\mA C),\mA C)  \geq
 \utA G(\mA G              ,\mA C)  $
 could fail to hold,  is if $\mA G$ would include some strategies that are not
 computable (not in $\aAS{G}^{RB}$) that provide higher payoff. However, as the
 support of $\mA G$ is composed of computed (i.e., computable) strategies in
 previous iterations, this cannot be the case.
 As such we conclude 
 $\utA G(f_{G}^{RBBR}(\mA C), \mA C)  \geq      \utA G(\mA G              ,\mA C)  $
 and similarly
 $\utA C(\mA G, f_{C}^{RBBR}(\mA G))  \geq      \utA C(\mA G              ,\mA C)  $.
 Together with \eqref{eq:UBR} this directly implies 
 $\utA G(\mA G,\mA C) = \utA G(f_{G}^{RBBR}(\mA C),\mA C)$ and 
 $\utA C(\mA G,\mA C) = \utA C(\mA G,f_{C}^{RBBR}(\mA G))$, indicating we found
 an RB-NE.
\end{proof}
\end{thm}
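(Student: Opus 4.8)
The plan is to exploit three ingredients: the termination test $\utA{BRs}\le 0$, the zero-sum structure of the induced game, and the fact that the maintained mixture $\langle\mA{G},\mA{C}\rangle$ is an NE of the subgame $SG$ whose support consists only of strategies that were \emph{computed} in earlier iterations (hence lie in $\aAS{G}^{RB}$ and $\aAS{C}^{RB}$ respectively). Putting these together will force both RBBR deviations to be non-improving, which is exactly the RB-NE condition.

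First I would write out the termination quantity from Algorithm~\ref{alg:PNM}, namely $\utA{BRs}=\utA{G}(f_{G}^{RBBR}(\mA{C}),\mA{C})+\utA{C}(\mA{G},f_{C}^{RBBR}(\mA{G}))$, and record the hypothesis $\utA{BRs}\le 0$. Since the induced game is zero-sum, $\utA{G}(\mA{G},\mA{C})+\utA{C}(\mA{G},\mA{C})=0$, so the right-hand side of the termination inequality can be rewritten in this split form.

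The key step is to show $\utA{G}(f_{G}^{RBBR}(\mA{C}),\mA{C})\ge\utA{G}(\mA{G},\mA{C})$, and symmetrically for $C$. By Definition~\ref{def:RBBR}, $f_{G}^{RBBR}(\mA{C})$ weakly beats every pure strategy in $\aAS{G}^{RB}$ against $\mA{C}$. By linearity of the payoff in the mixing weights, $\utA{G}(\mA{G},\mA{C})$ is a convex combination of the payoffs of the pure strategies in the support of $\mA{G}$ against $\mA{C}$, hence is at most the maximum such payoff. The only way this maximum could exceed $\utA{G}(f_{G}^{RBBR}(\mA{C}),\mA{C})$ would be if some pure strategy in the support of $\mA{G}$ were outside $\aAS{G}^{RB}$; but every such strategy was placed into $SG$ by a previous call to $f_{G}^{RBBR}$ via \funcName{AugmentGame}, so it is computable. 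The same reasoning gives $\utA{C}(\mA{G},f_{C}^{RBBR}(\mA{G}))\ge\utA{C}(\mA{G},\mA{C})$.

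Finally I would combine: adding the two inequalities just established yields $\utA{BRs}\ge\utA{G}(\mA{G},\mA{C})+\utA{C}(\mA{G},\mA{C})=0$; together with the termination hypothesis $\utA{BRs}\le 0$ this pins $\utA{BRs}=0$ and forces both inequalities to hold with equality. Equality says that neither player strictly gains by replacing its component of $\langle\mA{G},\mA{C}\rangle$ with its RBBR, i.e., $\utA{i}(\mA{i},\mA{j})\ge\utA{i}(f_{i}^{RBBR}(\mA{j}),\mA{j})$ for $i\in\{G,C\}$, which is precisely the definition of an RB-NE. I expect the middle step to be the only real obstacle: one must separate ``the RBBR dominates all \emph{computable} strategies'' from ``the current mixture is supported on computable strategies,'' and it is the second observation --- not the first --- that rules out a hypothetical non-computable component of $\mA{G}$ beating the current RBBR and breaking the argument. (The separate claim that PNM converges monotonically is not needed here, since the statement is conditioned on termination.)
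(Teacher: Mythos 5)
Your proposal is correct and follows essentially the same route as the paper's proof: rewrite the termination test using the zero-sum identity $\utA{G}(\mA{G},\mA{C})+\utA{C}(\mA{G},\mA{C})=0$, show each RBBR weakly improves on the current mixture because the mixture's support consists only of previously computed (hence computable) strategies, and conclude equality, which is the RB-NE condition. Your explicit convex-combination step merely spells out what the paper leaves implicit, so there is nothing substantive to add.
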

When there are only finitely many pure best responses that we can compute, 
as for finite GANGs, the algorithm will terminate. Moreover,
using the same finiteness argument, one can show that this method
\emph{monotonically} converges to an equilibrium~\citep{Oliehoek06GECCO}.

\section{Experiments}
{
\newlength{\mywOneB}
\setlength{\mywOneB}{0.7\columnwidth}
\begin{figure}[tb]
\centering
\scalebox{0.75}{
\begin{tabular}{cc}
    \hspace{-8mm}
\subfloat{\includegraphics[width = 0.98\mywOneB]{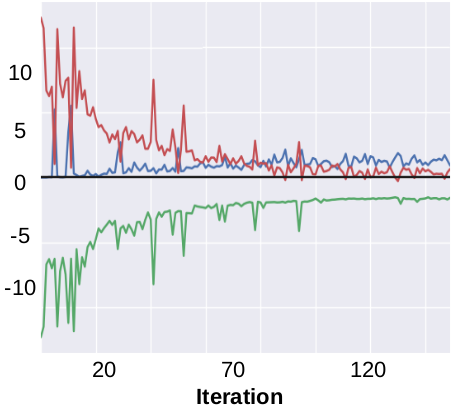}} &
\subfloat{\includegraphics[width = \mywOneB]{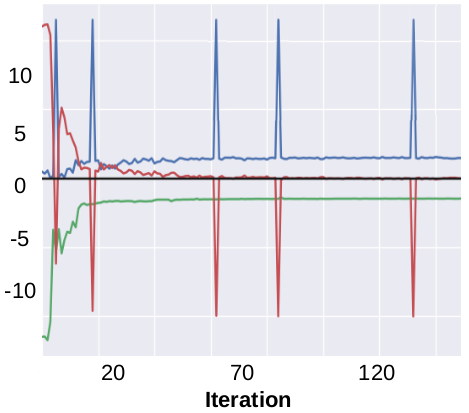}} 
\end{tabular}
}
\caption{Convergence without (left) and with (right) adding uniform fake data.
    Shown is payoff as a function of the number of iterations of PNM: generator (blue), classifier (green), tests (red).
    The tests that do not generate positive payoff (red line $< 0$) are not 
    added to the mixture.
}
\label{fig:unifakeconv}
\end{figure}
}

%
%
%
%

\newlength{\mywTwo}
\setlength{\mywTwo}{0.6\columnwidth}

\begin{figure*}[htbp]
\centering
\scalebox{0.85}{
\begin{tabular}{ccc}
\subfloat{\includegraphics[width = \mywTwo]{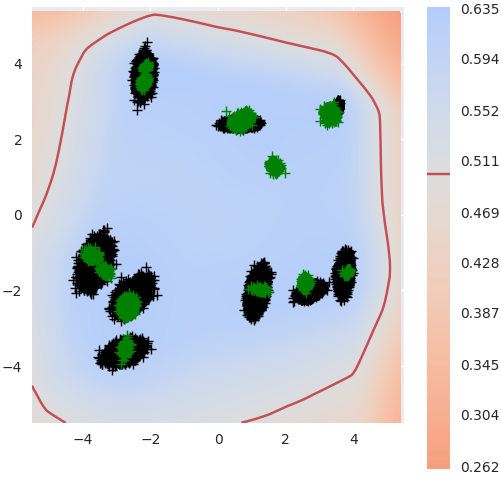}} &
\subfloat{\includegraphics[width = 1.05\mywTwo]{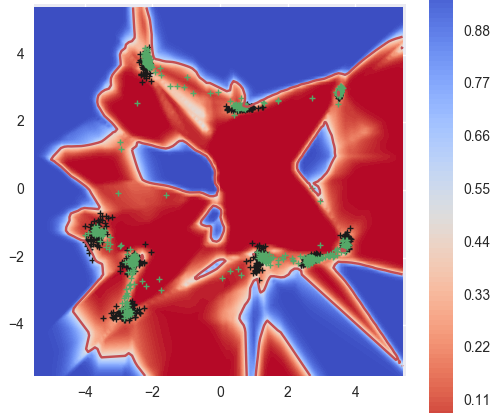}} &
\subfloat{\includegraphics[width = 1.05\mywTwo]{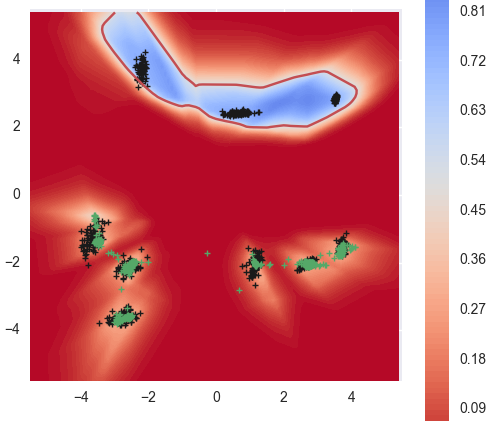}} \\

\subfloat{\includegraphics[width = \mywTwo]{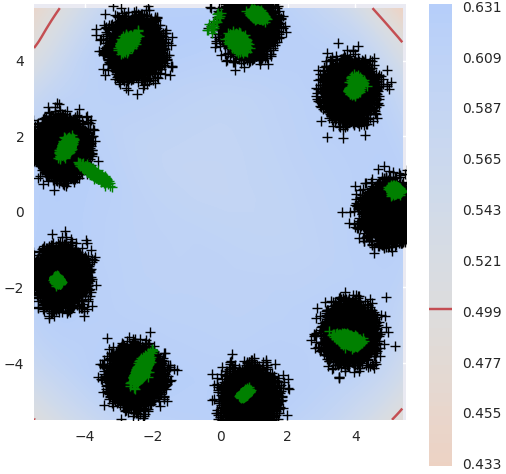}} &
\subfloat{\includegraphics[width = 1.05\mywTwo]{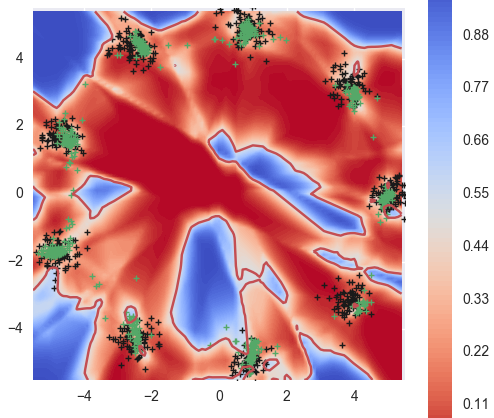}} &
\subfloat{\includegraphics[width = 1.05\mywTwo]{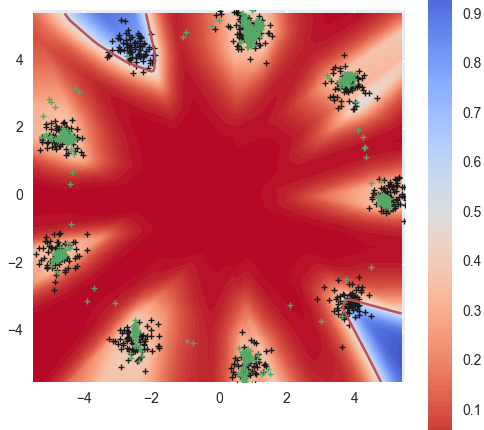}} \\

\subfloat{\includegraphics[width = \mywTwo]{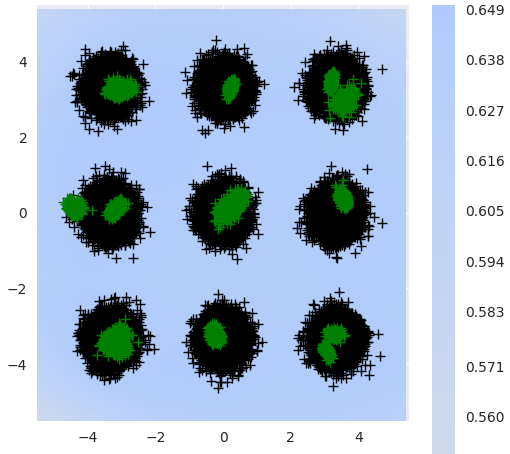}} &
\subfloat{\includegraphics[width = 1.05\mywTwo]{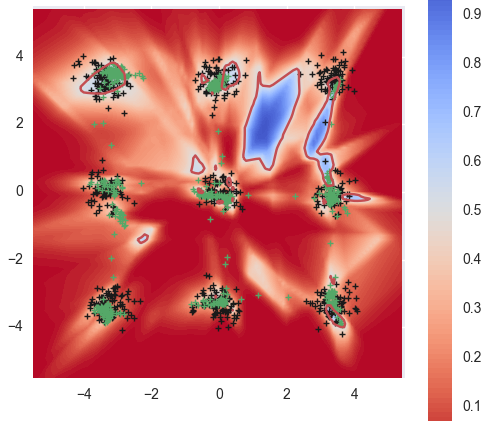}} &
\subfloat{\includegraphics[width = 1.05\mywTwo]{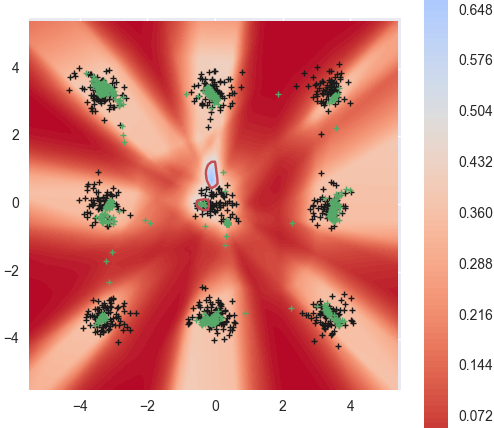}} \\

\vspace{-5mm}
\end{tabular}
}
\caption{Results on mixtures with 9 components. Left: PNM, Center: GAN with BN, Right: GAN without BN.
    True data is shown in black, while fake data is green.
    \vspace{-5mm}
}
\label{fig:9comp}
\end{figure*}

\begin{figure*}[htbp]
\centering
\scalebox{0.85}{
\begin{tabular}{ccc}
\subfloat{\includegraphics[width = \mywTwo]{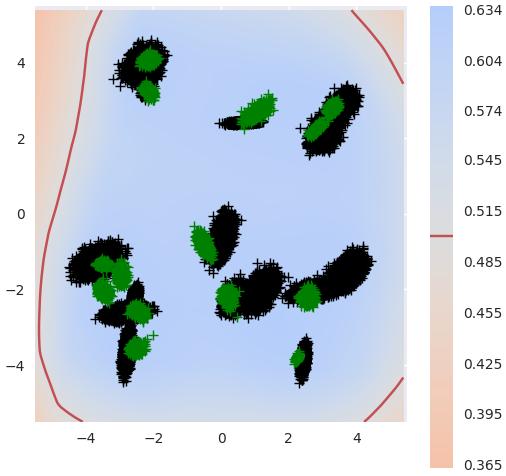}} &
\subfloat{\includegraphics[width = 1.05\mywTwo]{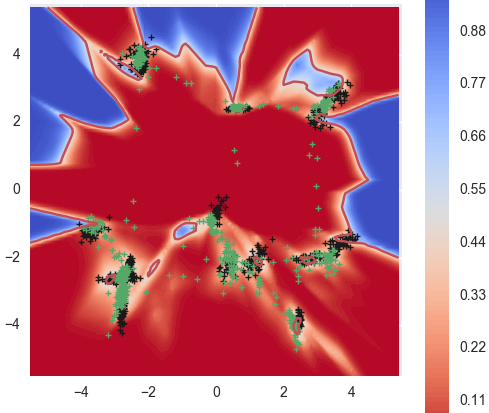}} &
\subfloat{\includegraphics[width = 1.05\mywTwo]{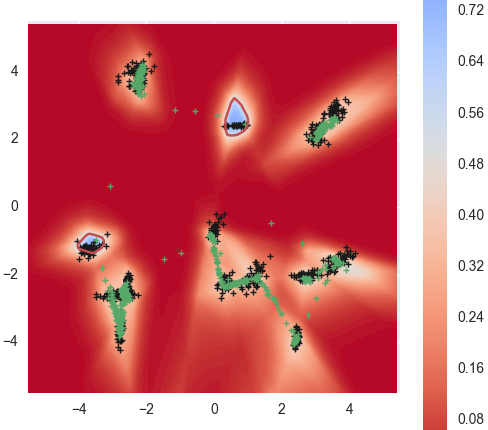}} \\

\subfloat{\includegraphics[width = \mywTwo]{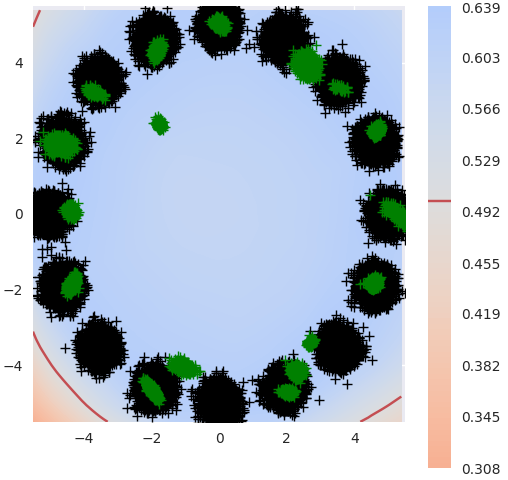}} &
\subfloat{\includegraphics[width = 1.05\mywTwo]{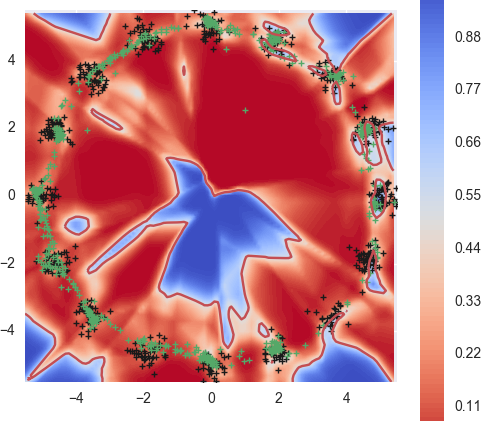}} &
\subfloat{\includegraphics[width = 1.05\mywTwo]{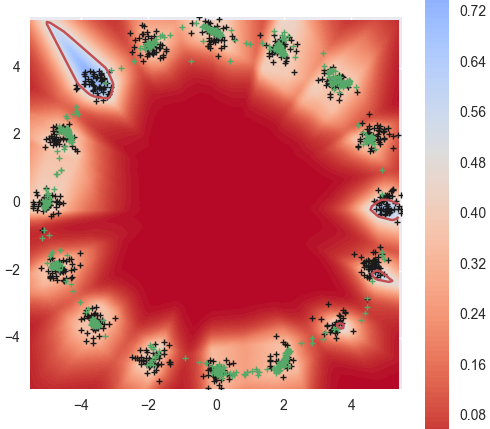}} \\

\subfloat{\includegraphics[width = \mywTwo]{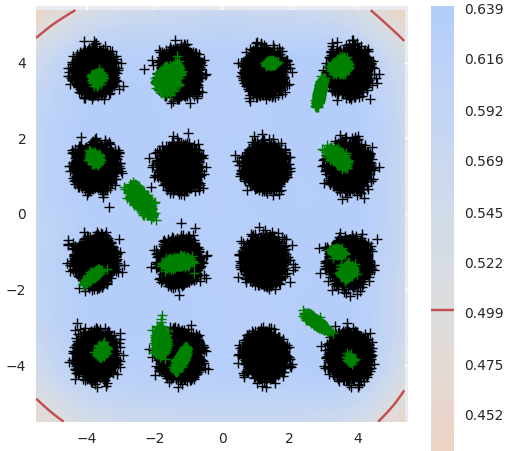}} &
\subfloat{\includegraphics[width = 1.05\mywTwo]{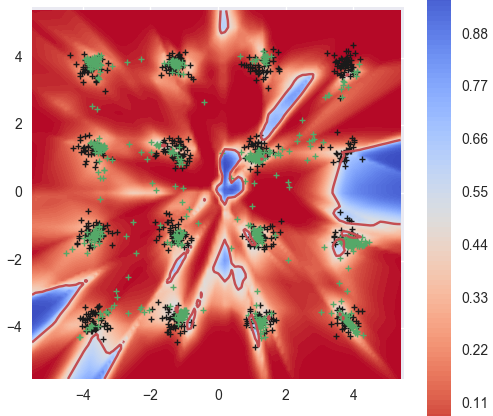}} &
\subfloat{\includegraphics[width = 1.05\mywTwo]{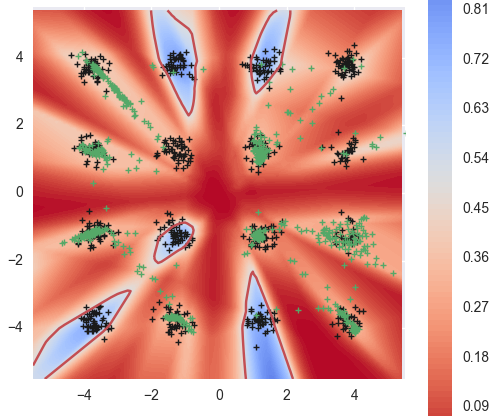}} \\

\vspace{-5mm}
\end{tabular}
}
\caption{Results on mixtures with 16 components. Left: PNM, Center: GAN with BN, Right: GAN without BN.    
    True data is shown in black, while fake data is green.
    \vspace{-5mm}
}
\label{fig:16comp}
\end{figure*}

%
%
%

\newlength{\mywThree}
\setlength{\mywThree}{0.6\columnwidth}

\begin{figure*}[htbp]
\centering
\scalebox{0.9}{
\begin{tabular}{ccc}
\subfloat{\includegraphics[width = \mywThree]{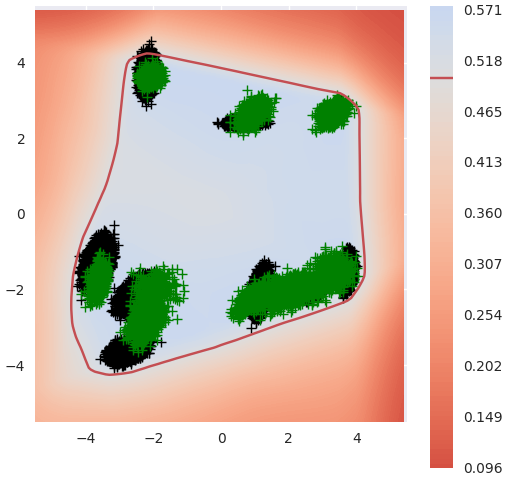}} &
\subfloat{\includegraphics[width = \mywThree]{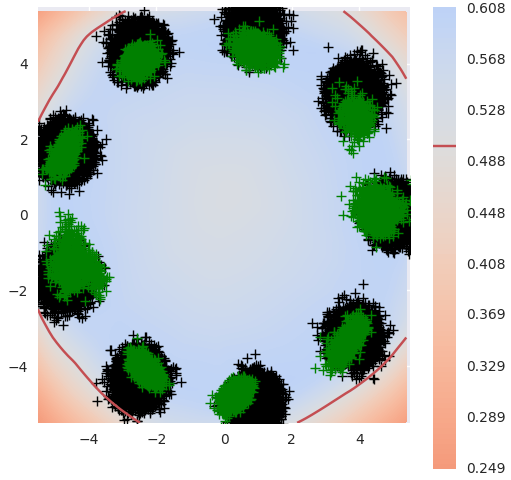}} &
\subfloat{\includegraphics[width = \mywThree]{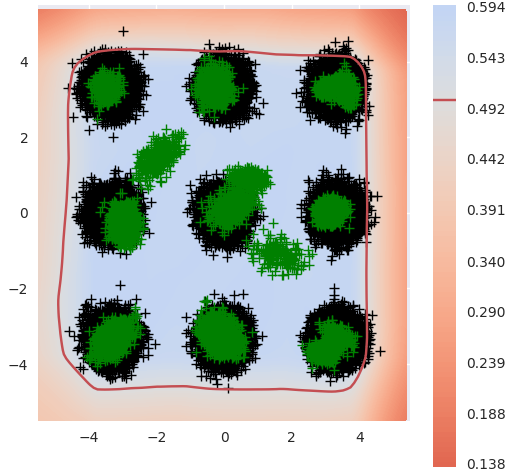}}
\end{tabular}
}
\caption{Results for PNM with a learning rate of $10^{-5}$ for the generator.}
\label{fig:slowG}
\end{figure*}

\label{sec:experiments}
Here we report on experiments that aim to test if using existing
tools such as PNM can help in reducing problems with training GANs,
such as missed modes. 
This is very difficult to asses on complex data like images; 
in fact, there is debate about whether GANs are overfitting (memorizing the data). Assessing this from samples is very difficult; only crude methods have been proposed e.g., \cite{AroraZ17}.
Therefore, we restrict our proof-of-concept results to synthetic mixtures of
Gaussians for which the distributions can readily be visualized.

\noindent\textbf{Experimental setup. }We compare to a GAN implementation by
\citet{shaform}. For PNM, we use the same architectures for $G$ and
$C$ as the GAN implementation. The settings for GAN and PNM training are summarized in Table \ref{tab:settings}.
\begin{table}[h]
\centering
\scalebox{0.8}{
\begin{tabular}{c|c|c|}
\cline{2-3}
 & \textbf{GAN} & \textbf{RBBR} \\ \hline
\multicolumn{1}{|c|}{\textit{Iterations}} & 2500 & 1000 \\ \hline
\multicolumn{1}{|c|}{\textit{Learning Rate}} & $2\cdot 10^{-4}$ & $10^{-3}$ \\ \hline
\multicolumn{1}{|c|}{\textit{Batch Size}} & 64 & 128 \\ \hline
\multicolumn{1}{|c|}{\textit{Measuring Function}} & $\log$ & $10^{-5}$-bounded $\log$ \\ \hline
\end{tabular}
}\\
~
\caption{Settings used to train GANs and RBBRs.}
\vspace{-5mm}
\label{tab:settings}
\end{table}
The mixture components comprise grids and annuli with equal-variance components, as well as non-symmetric cases with randomly located modes and with a random covariance matrix for each mode.
For each domain we create test cases with 9 and 16
 components. In our plots,
black points are real data, green points are generated data. 
Blue indicates areas that are classified as `realistic'
while red indicates a `fake' classification by~$C$.

\noindent
\textbf{Plain application of PNM. }
In GANGs, $G$ informs $C$ about what good strategies are and vice versa.
However, as we will make clear here, this $G$
has limited incentive to provide the best possible training signal
to $C$.
This is illustrated in \fig~\ref{fig:overfitted}. The leftmost
two plots show the \emph{same }best response by $C$: zoomed in on the
data and zoomed out to cover some fake outliers. Clearly, $C$ needs
to really find creative solutions to try and get rid of the faraway
points, and also do good near the data. As a result, it ends up with
the shown narrow beams in an effort to give high score to the true
data points (a very broad beam would lower their scores), but this
exposes $C$ to being exploited in later iterations: $G$ needs to merely
shift the samples to some other part of the vast empty space around
the data. This phenomenon is nicely illustrated by the remaining three
plots (that are from a different training run, but illustrate it well):
the middle plot shows an NE that targets one beam, this is exploited
by $G$ in its next best response (fourth image, note the different scales
on the axes, the `beam' is the same). The process continues, and
$C$ will need to find mixtures of all these type of complex counter
measures (rightmost plot). This process can take a long time.

\noindent\textbf{PNM with Added Uniform Fake Data.} However, the GANG formalism
allows us to incorporate a simple way to resolve this
issue and make training more effective. In each iteration, we look
at the total span (i.e., bounding box) of the real and fake data,
and we add some uniformly sampled fake data in this bounded
box (we used the same amount as fake data produced by $G$). In that
way, we further guide $C$ in order to better guide the
generator (by directly making clear that all the area beyond the true
data is fake). The impact of this procedure is illustrated by \fig~\ref{fig:unifakeconv},
which shows the payoffs that the maintained mixtures $\mA G,\mA C$
achieve against the RBBRs computed against them (so this is a measure
of security), as well as the `payoff for tests' ($\utA{BR}$). %
Clearly, adding uniform fake data leads
to much faster convergence.\\~%
As such, we perform our main comparison to GANs with this uniform
fake data component added in. These results are shown in \fig~\ref{fig:9comp} and
\ref{fig:16comp}, and clearly convey three main points: first, the PNM mixed classifier
has a much flatter surface than the classifier found by the GAN, which
is in line with the theoretical predictions about the equilibrium
\citep{Goodfellow14NIPS27}. More importantly, however, we see that
this flatter surface is not coming at the cost of inaccurate samples.
In contrast: nearly all samples shown are hitting one of the modes
and thus are highly accurate, much more so than the GAN's samples.
Finally, except in the 16-component grid plot, we see that the approach
does not suffer from partial mode coverage as it leaves out no modes. We point out
that these PNM results are without batch normalization, while the GAN results without 
batch normalization suffered from many inaccurate samples and severe
partial mode coverage in many cases, as shown in Figures~\ref{fig:9comp} and \ref{fig:16comp}. \\~
\textbf{Impact of Generator Learning Rate. }The above results show
that PNM can accurately cover multiple modes, however, not all
modes are \emph{fully }covered; some amount of mode collapse still
takes place. As also pointed out by, e.g.,~\cite{ACB17_WGAN},
the best response of $G$ against $\mA C$ is a single point with the
highest `realness', and therefore the WGAN they introduced uses
fewer iterations for $G$ than for $C$. Inspired by this, we investigate
if we can reduce the mode collapse by reducing the learning rate of
$G$ (to $10^{-5}$). The results in \fig~\ref{fig:slowG} clearly
show that more area of the modes are covered confirming this hypothesis.
However, it also makes clear that by doing so, we are now generating
some data outside of the true data. We point out that also with mode
collapse, the PNM mechanism theoretically could still converge, by
adding in more and more delta peaks covering parts of the modes. 
In fact, this process in work is already illustrated in the plots in \fig~\ref{fig:9comp}:
each plot in the left column contains at least one true data (black) mode which is
covered by multiple fake data (green) modes.
However, if these fake data modes would be true delta peaks, 
the number of mixture components required clearly would be
huge, making this infeasible in practice. As such, this 
leads to parameter tweaking; investigation of better solutions
is deferred to future work.

\section{Related Work}
\label{sec:relatedWork}

\textbf{Progress in zero-sum games.} 
\citep{BosanskyKLP14} devise a double-oracle algorithm for computing exact
equilibria in extensive-form games with imperfect information.
Their algorithm uses~\emph{best response oracles}; PNM does so too, though in 
this paper using resource-bounded rather than exact best responses.
Inspired by GANs, \citep{HazanSZ17} deal with general zero-sum settings with
non-convex loss functions. They introduce a weakening of local equilibria known
as \emph{smoothed local equilibria} and provide algorithms with guarantees on
the smoothed local regret. In contrast, we work with a generalization of local
equilibrium (RB-NE) that allows for stronger notions of equilibrium, not only
weaker ones, depending on the power of one's RBBR functions.
For the more restricted class of convex-concave zero-sum games, it was recently
shown that Optimistic Mirror Descent (a variant of gradient descent)
and its generalization Optimistic Follow-the-Regularized-Leader achieve faster
convergence rates than gradient descent~\cite{RakhlinS13a,RakhlinS13b}.
These algorithms have been explored in the context of GANs by~\cite{DaskISZ17}.
However, %
the convergence results do not apply as GANs are not convex-concave.

\textbf{GANs.} 
The literature on GANs has been growing at an incredible rate, and due to space
constraints, we cannot give a full overview of all the related works, such as
\citep{ACB17_WGAN,ArjovskyB17,Huszar15_Exotic,NowozinCT16_FGAN,DaiABHC17_ICLR,ZhaoML16_EGAN_ICLR,AroraZ17,SalimansGZCRCC16,GulrajaniAADC17,Radford15arxiv}.
Instead we refer to \cite{Unterthiner17arXiv} for a comprehensive recent
overview.
That same paper also introduces Coulomb GANs \cite{Unterthiner17arXiv}.
As we do here, the authors show convergence, but for them only under the strong
assumption that the ``generator samples can move freely'' (which is not the
case when training via gradient descent; samples can only move small
steps). 
Moreover, their approach essentially performs non-parametric density
estimation, which is based on the (Euclidean) distance between data points,
which we believe undermines one of the most attractive features of GANs
(not needing a distance metric). Furthermore, it is widely known that using 
metrics in high-dimensional spaces is problematic (i.e., the ``curse
of dimensionality'', see, e.g., \cite[Sect. 2.5]{friedman2001elements}).
Perhaps not surprisingly \citeauthor{Unterthiner17arXiv} report a higher
frequency of ``generated samples that are non-sensical interpolations of
existing data modes''.

\textbf{Explicit representations of mixtures of strategies.}
Recently, more researchers have investigated the idea of (more or less)
explicitly representing a set or mixture of strategies for the players.
For instance, \cite{ImMKT16} retains sets of networks that
are trained by randomly pairing up with a network for the other player thus
forming a GAN. This, like PNM, can be interpreted as a coevolutionary
approach, but unlike PNM, it does not have any convergence guarantees.

Generally, explicit mixtures can bring advantages in two ways:
\emph{(1) Representation}: intuitively, a mixture of $k$ neural networks could better
represent a complex distribution than a single NN of the same size, and
would be roughly on par with a single network that is $k$ times as big. Arora et
al.~\cite{Arora17ICML} show how to create such a bigger network using a `multi-way
selector'. In preliminary experiments [not reported] we observed mixtures of
simpler networks leading to better performance than a single larger network. 
\emph{(2) Training}: Arora et al.\ use an architecture that is
tailored to representing a mixture of components and train a single such
network. We, in contrast, explicitly represent the mixture; given the
observation that good solutions will take the form of a mixture, this is a
form of domain knowledge that facilitates learning and convergence
guarantees. %

The most closely related paper that we have come across is by Grnarova et
al.~\cite{Grnarova17arxiv}, which also builds upon game theoretic tools to give certain
convergence guarantees. The main differences with our paper are as follows:
1) We clarify how zero-sum games relate to original GAN formulation, 
Wasserstein GAN objectives, and Goodfellow's `heuristic trick'.
2) We provide a much more general form of convergence (to an RB-NE) that is
applicable to \emph{all} architectures, that only depends on the power to compute best
responses, and show that PNM converges in this sense. We also show that if
agents can compute an $\epsilon$-best response, then the procedure converges to an $\epsilon$-NE.
3) Grnarova et al. show that for a very specific GAN architecture their
Alg.~1 converges to an $\epsilon$-NE. This result is an instantiation of our more
general theory: they assume they can compute exact (for $G$) and
$\epsilon$-approximate (for $C$) best responses; for such powerful players our
Theorem~\ref{thm:epsNashGivenSufficientResources} provides that guarantee.
4) Their Algorithm~2 %
does not provide guarantees. 

\textbf{Bounded rationality.}
The proposed notion of RB-NE is one of bounded rationality
\cite{Simon55behavioral}.
Over the years a number of different such notions have been proposed, e.g.,
see \cite{Russell97AIJ,Zilberstein11metareasoning}. 
Some of these also target agents in games. Perhaps the most
well-known such a concept is the quantal response equilibrium
\cite{McKelvey95GEB}. Other concepts take into account an explicit cost of
computation \cite{Rubinstein86JEC,Halpern14TopiCS}, or explicity limit the
allowed strategy, for instance by limiting the size of finite-state
machines that might be employed \cite{Halpern14TopiCS}.
However, these notions are motivated to explain \emph{why} people might show
certain behaviors or \emph{how} a decision maker should use its
limited resources. We on the other hand, take the why and how of bounded
rationality as a given, and merely model the outcome of a resource-bounded
computation (as the computable set $\aAS i^{RB}\subseteq\aAS i$ ). In other
words, we make a minimal assumption on the nature of the
resource-boundedness, and aim to show that even under such general assumptions we
can still reach a form of equilibrium, an RB-NE, of which the quality can
be directly linked
(via Theorem~\ref{thm:epsNashGivenSufficientResources}) 
to the computational power of the agents.

\section{Conclusions}

\label{sec:conclusions}

We introduced GANGs---Generative Adversarial Network Games---a novel
framework for %
representing adversarial generative models
by formulating them as finite zero-sum games. 
The framework provides strong links to the rich literature in game theory, and
makes available the rich arsenal of game theory solution techniques. 
It also clarifies the solutions
that we try to find as saddle points in \emph{mixed strategies} 
resolving the possibility of getting stuck in a local NE.
As finite GANGs have extremely large action spaces we cannot expect to
exactly (or $\epsilon$-approximately) solve them. Therefore,
we introduced a concept of bounded rationality,
Resource-Bounded Nash Equilibrium (RB-NE). This notion is richer than the 
`local Nash Equilibria' in that it captures not only failures of escaping
local optima of gradient descent, but applies to any approximate best
response computations, including methods with random restarts. 

While in GANGs with mixed strategies gradient descent has no inherent problems (any
local NE is a global NE), there are no known deep learning (i.e.,
gradient-based) methods to optimize mixed strategies with \emph{very} 
large support sizes. However, in our RB-NE formulation,
we can draw on a richer set of methods for solving zero-sum
games~\citep{Fudenberg98book,RakhlinS13b,Foster16NIPS,Oliehoek06GECCO}.
In this paper, we focus on PNM, which
monotonically converges to an RB-NE, and we empirically investigated
its effectiveness in search for good strategies in a setting where
the real data generating process is a mixture of Gaussians. Our proof-of-concept
results demonstrate this method indeed can deal well with typical
GAN problems such as mode collapse, partial mode coverage and forgetting.

\textbf{Future work.} We presented a framework that can have many
instantiations and modifications. 
For example, one direction is to employ different learning algorithms.
Another direction could focus on modifications of PNM, such as to allow
discarding ``stale'' pure strategies, which
would allow the process to run for longer without being inhibited by the size
of the resulting zero-sum ``subgame'' that must be maintained and repeatedly
solved.
The addition of fake uniform data as a guiding component suggests
that there might be benefit of considering ``deep-interactive learning''
where there is deepness in the number of players that interact in
order to give each other guidance in adversarial training. This could
potentially be modelled by zero-sum polymatrix games~\citep{CaiCDP16}.

\section*{Acknowledgments}
This research made use of a GPU donated by Nvidia.

%


\providecommand{\noopsort}[1]{}

\end{document}